\newtheorem{theorem}{Theorem}
\newtheorem{corollary}{Corollary}
\theoremstyle{remark}
\def\ci{\!\perp\!}
\def\ra{\rightarrow}
\def\la{\leftarrow}
\newcommand{\comments}[1]{}
\begin{document}

\title[]{Error AMP Chain Graphs}

\author[]{Jose M. Pe\~{n}a\\
ADIT, IDA, Link\"oping University, SE-58183 Link\"{o}ping, Sweden\\
jose.m.pena@liu.se}

\date{\currenttime, \ddmmyydate{\today}}

\begin{abstract}
Any regular Gaussian probability distribution that can be represented by an AMP chain graph (CG) can be expressed as a system of linear equations with correlated errors whose structure depends on the CG. However, the CG represents the errors implicitly, as no nodes in the CG correspond to the errors. We propose in this paper to add some deterministic nodes to the CG in order to represent the errors explicitly. We call the result an EAMP CG. We will show that, as desired, every AMP CG is Markov equivalent to its corresponding EAMP CG under marginalization of the error nodes. We will also show that every EAMP CG under marginalization of the error nodes is Markov equivalent to some LWF CG under marginalization of the error nodes, and that the latter is Markov equivalent to some directed and acyclic graph (DAG) under marginalization of the error nodes and conditioning on some selection nodes. This is important because it implies that the independence model represented by an AMP CG can be accounted for by some data generating process that is partially observed and has selection bias. Finally, we will show that EAMP CGs are closed under marginalization. This is a desirable feature because it guarantees parsimonious models under marginalization.
\end{abstract}

\maketitle

\section{Introduction}

Chain graphs (CGs) are graphs with possibly directed and undirected edges, and no semidirected cycle. They have been extensively studied as a formalism to represent independence models. CGs extend Markov networks, i.e. undirected graphs, and Bayesian networks, i.e. directed and acyclic graphs (DAGs). Therefore, they can model symmetric and asymmetric relationships between the random variables of interest, which is one of the reasons of their popularity. However, unlike Markov and Bayesian networks whose interpretation is unique, there are four different interpretations of CGs as independence models \citep{CoxandWermuth1993,CoxandWermuth1996,Drton2009,SonntagandPenna2013}. In this paper, we are interested in the AMP interpretation \citep{Anderssonetal.2001,Levitzetal.2001} and the LWF interpretation \citep{Frydenberg1990,LauritzenWermuth1989}.

Any regular Gaussian probability distribution that can be represented by an AMP CG can be expressed as a system of linear equations with correlated errors whose structure depends on the CG \citep[Section 5]{Anderssonetal.2001}. However, the CG represents the errors implicitly, as no nodes in the CG correspond to the errors. We propose in this paper to add some deterministic nodes to the CG in order to represent the errors explicitly. We call the result an EAMP CG. We will show that, as desired, every AMP CG is Markov equivalent to its corresponding EAMP CG under marginalization of the error nodes, i.e. the independence model represented by the former coincides with the independence model represented by the latter. We will also show that every EAMP CG under marginalization of the error nodes is Markov equivalent to some LWF CG under marginalization of the error nodes, and that the latter is Markov equivalent to some DAG under marginalization of the error nodes and conditioning on some selection nodes. The relevance of this result can be best explained by extending to AMP CGs what \citet[p. 838]{Koster2002} stated for summary graphs and \citet[p. 981]{RichardsonandSpirtes2002} stated for ancestral graphs: The fact that an AMP CG has a DAG as departure point implies that the independence model associated with the former can be accounted for by some data generating process that is partially observed (corresponding to marginalization) and has selection bias (corresponding to conditioning). Finally, we will show that EAMP CGs are closed under marginalization, in the sense that every EAMP CG under marginalization of any superset of the error nodes is Markov equivalent to some EAMP CG under marginalization of the error nodes.\footnote{Our definition of closed under marginalization is an adaptation of the standard one to the fact that we only care about independence models under marginalization of the error nodes.} The relevance of this result can be best appreciated by noting that AMP CGs are not closed under marginalization \cite[Section 9.4]{RichardsonandSpirtes2002}. Therefore, the independence model represented by an AMP CG under marginalization may not be representable by any AMP CG. Therefore, we may have to represent it by an AMP CG with extra edges so as to avoid representing false independencies. However, if we consider the EAMP CG corresponding to the original AMP CG, then we will show that the marginal independence model can be represented by some EAMP CG under marginalization of the error nodes. The latter case is of course preferred, because the graphical model is more parsimonious as it does not include extra edges. See also \citet[p. 965]{RichardsonandSpirtes2002} for a discussion on the importance of the class of models considered being closed under marginalization.

It is worth mentioning that \citet[Theorem 6]{Anderssonetal.2001} have identified the conditions under which an AMP CG is Markov equivalent to some LWF CG.\footnote{To be exact, \citet[Theorem 6]{Anderssonetal.2001} have identified the conditions under which all and only the probability distributions that can be represented by an AMP CG can also be represented by some LWF CG. However, for any AMP or LWF CG $G$, there are Gaussian probability distributions that have all and only the independencies in the independence model represented by $G$, as shown by \citet[Theorem 6.1]{Levitzetal.2001} and \citet[Theorems 1 and 2]{Penna2011}. Then, our formulation is equivalent to the original formulation of the result by \citet[Theorem 6]{Anderssonetal.2001}.} It is clear from these conditions that there are AMP CGs that are not Markov equivalent to any LWF CG. The results in this paper differ from those by \citet[Theorem 6]{Anderssonetal.2001}, because we show that every AMP CG is Markov equivalent to some LWF CG with error nodes under marginalization of the error nodes.

It is also worth mentioning that \citet[p. 1025]{RichardsonandSpirtes2002} show that there are AMP CGs that are not Markov equivalent to any DAG under marginalization and conditioning. However, the results in this paper show that every AMP CG is Markov equivalent to some DAG with error and selection nodes under marginalization of the error nodes and conditioning of the selection nodes. Therefore, the independence model represented by any AMP CG has indeed some DAG as departure point and, thus, it can be accounted for by some data generating process. The results in this paper do not contradict those by \citet[p. 1025]{RichardsonandSpirtes2002}, because they did not consider deterministic nodes while we do (recall that the error nodes are deterministic).

Finally, it is also worth mentioning that EAMP CGs are not the first graphical models to have DAGs as departure point or to be closed under marginalization. Specifically, summary graphs \citep{CoxandWermuth1996}, MC graphs \citep{Koster2002}, ancestral graphs \citep{RichardsonandSpirtes2002}, and ribonless graphs \citep{Sadeghi2013} predate EAMP CGs and have the mentioned properties. However, none of these other classes of graphical models subsumes AMP CGs, i.e. there are independence models that can be represented by an AMP CG but not by any member of the other class \citep[Section 4]{SadeghiandLauritzen2012}. Therefore, none of these other classes of graphical models subsumes EAMP CGs under marginalization of the error nodes. This justifies the present study.

The rest of the paper is organized as follows. We start by reviewing some concepts in Section \ref{sec:preliminaries}. We discuss in Section \ref{sec:deterministic} the semantics of deterministic nodes in the context of AMP and LWF CGs. In Section \ref{sec:lwf}, we introduce EAMP CGs and use them to show that every AMP CG is Markov equivalent to some LWF CG under marginalization. In that section we also show that every AMP CG is Markov equivalent to some DAG under marginalization and conditioning. In Section \ref{sec:closed}, we show that EAMP CGs are closed under marginalization. Finally, we close with some conclusions in Section \ref{sec:conclusions}.

\section{Preliminaries}\label{sec:preliminaries}

In this section, we review some concepts from graphical models that are used later in this paper. All the graphs and probability distributions in this paper are defined over a finite set $V$ unless otherwise stated. The elements of $V$ are not distinguished from singletons. The operators set union and set difference are given equal precedence in the expressions. The term maximal is always wrt set inclusion. All the graphs in this paper are simple, i.e. they contain at most one edge between any pair of nodes. Moreover, the edge is undirected or directed.

If a graph $G$ contains an undirected or directed edge between two nodes $V_{1}$ and $V_{2}$, then we say that $V_{1} - V_{2}$ or $V_{1} \ra V_{2}$ is in $G$. The parents of a set of nodes $X$ of $G$ is the set $pa_G(X) = \{V_1 | V_1 \ra V_2$ is in $G$, $V_1 \notin X$ and $V_2 \in X \}$. A route between a node $V_{1}$ and a node $V_{n}$ in $G$ is a sequence of (not necessarily distinct) nodes $V_{1}, \ldots, V_{n}$ st $V_i - V_{i+1}$, $V_i \ra V_{i+1}$ or $V_i \la V_{i+1}$ is in $G$ for all $1 \leq i < n$. If the nodes in the route are all distinct, then the route is called a path. A route is called undirected if $V_i - V_{i+1}$ is in $G$ for all $1 \leq i < n$. A route is called strictly descending if $V_i \ra V_{i+1}$ is in $G$ for all $1 \leq i < n$. The strict ascendants of $X$ is the set $san_G(X) = \{V_1 |$ there is a strictly descending route from $V_1$ to $V_n$ in $G$, $V_1 \notin X$ and $V_n \in X \}$. A route $V_{1}, \ldots, V_{n}$ in $G$ is called a cycle if $V_n=V_1$. Moreover, it is called a semidirected cycle if $V_1 \ra V_2$ is in $G$ and $V_i \ra V_{i+1}$ or $V_i - V_{i+1}$ is in $G$ for all $1 < i < n$. A chain graph (CG) is a graph with no semidirected cycles. A set of nodes of a graph is connected if there exists an undirected path in the graph between every pair of nodes in the set. A connectivity component of a CG is a maximal connected set.

We now recall the semantics of AMP and LWF CGs. A node $B$ in a path $\rho$ in an AMP CG $G$ is called a triplex node in $\rho$ if $A \ra B \la C$, $A \ra B - C$, or $A - B \la C$ is a subpath of $\rho$. Moreover, $\rho$ is said to be $Z$-open with $Z \subseteq V$ when

\begin{itemize}
\item every triplex node in $\rho$ is in $Z \cup san_G(Z)$, and

\item no non-triplex node $B$ in $\rho$ is in $Z$, unless $A - B - C$ is a subpath of $\rho$ and some node in $pa_G(B)$ is not in $Z$.
\end{itemize}

A section of a route $\rho$ in a CG is a maximal undirected subroute of $\rho$. A section $V_{2} - \ldots - V_{n-1}$ of $\rho$ is a collider section of $\rho$ if $V_{1} \rightarrow V_{2} - \ldots - V_{n-1} \leftarrow V_{n}$ is a subroute of $\rho$. A route $\rho$ in a CG is said to be $Z$-open when

\begin{itemize}
\item every collider section of $\rho$ has a node in $Z$, and

\item no non-collider section of $\rho$ has a node in $Z$.
\end{itemize}

Let $X$, $Y$ and $Z$ denote three disjoint subsets of $V$. When there is no $Z$-open path (respectively route) in an AMP (respectively LWF) CG $G$ between a node in $X$ and a node in $Y$, we say that $X$ is separated from $Y$ given $Z$ in $G$ and denote it as $X \ci_G Y | Z$. The independence model represented by $G$, denoted as $I_{AMP}(G)$ or $I_{LWF}(G)$, is the set of separations $X \ci_G Y | Z$. In general, $I_{AMP}(G) \neq I_{LWF}(G)$. However, if $G$ is a directed and acyclic graph (DAG), then $I_{AMP}(G) = I_{LWF}(G)$. Given an AMP or LWF CG $G$ and two disjoint subsets $L$ and $S$ of $V$, we denote by $[I(G)]_L^S$ the independence model represented by $G$ under marginalization of the nodes in $L$ and conditioning on the nodes in $S$. Specifically, $X \ci_G Y | Z$ is in $[I(G)]_L^S$ iff $X \ci_G Y | Z \cup S$ is in $I(G)$ and $X, Y, Z \subseteq V \setminus L \setminus S$.

Finally, we denote by $X \ci_p Y | Z$ that $X$ is independent of $Y$ given $Z$ in a probability distribution $p$. We say that $p$ is Markovian wrt an AMP or LWF CG $G$ when $X \ci_p Y | Z$ if $X \ci_G Y | Z$ for all $X$, $Y$ and $Z$ disjoint subsets of $V$. We say that $p$ is faithful to $G$ when $X \ci_p Y | Z$ iff $X \ci_G Y | Z$ for all $X$, $Y$ and $Z$ disjoint subsets of $V$.

\section{AMP and LWF CGs with Deterministic Nodes}\label{sec:deterministic}

We say that a node $A$ of an AMP or LWF CG is determined by some $Z \subseteq V$ when $A \in Z$ or $A$ is a function of $Z$. In that case, we also say that $A$ is a deterministic node. We use $D(Z)$ to denote all the nodes that are determined by $Z$. From the point of view of the separations in an AMP or LWF CG, that a node is determined by but is not in the conditioning set of a separation has the same effect as if the node were actually in the conditioning set. We extend the definitions of separation for AMP and LWF CGs to the case where deterministic nodes may exist.

Given an AMP CG $G$, a path $\rho$ in $G$ is said to be $Z$-open when

\begin{itemize}
\item every triplex node in $\rho$ is in $D(Z) \cup san_G(D(Z))$, and

\item no non-triplex node $B$ in $\rho$ is in $D(Z)$, unless $A - B - C$ is a subpath of $\rho$ and some node in $pa_G(B)$ is not in $D(Z)$.
\end{itemize}

Given an LWF CG $G$, a route $\rho$ in $G$ is said to be $Z$-open when

\begin{itemize}
\item every collider section of $\rho$ has a node in $D(Z)$, and

\item no non-collider section of $\rho$ has a node in $D(Z)$.
\end{itemize}

It should be noted that we are not the first to consider graphical models with deterministic nodes. For instance, \citet[Section 4]{Geigeretal.1990} consider DAGs with deterministic nodes. However, our definition of deterministic node is more general than theirs.

\section{From AMP CGs to DAGs Via EAMP CGs}\label{sec:lwf}

\citet[Section 5]{Anderssonetal.2001} show that any regular Gaussian probability distribution $p$ that is Markovian wrt an AMP CG $G$ can be expressed as a system of linear equations with correlated errors whose structure depends on $G$. Specifically, assume without loss of generality that $p$ has mean 0. Let $K_i$ denote any connectivity component of $G$. Let $\Omega^i_{K_i,K_i}$ and $\Omega^i_{K_i ,pa_G(K_i)}$ denote submatrices of the precision matrix $\Omega^i$ of $p(K_i, pa_G(K_i))$. Then, as shown by \citet[Section 2.3.1]{Bishop2006},
\[
K_i | pa_G(K_i) \sim \mathcal{N}(\beta^i pa_G(K_i), \Lambda^i)
\]
where
\[
\beta^i= -(\Omega^i_{K_i,K_i})^{-1} \Omega^i_{K_i ,pa_G(K_i)}
\]
and
\[
(\Lambda^i)^{-1}= \Omega^i_{K_i,K_i}.
\]

Then, $p$ can be expressed as a system of linear equations with normally distributed errors whose structure depends on $G$ as follows:
\[
K_i = \beta^i \: pa_G(K_i) + \epsilon^i
\]
where 
\[
\epsilon^i \sim \mathcal{N}(0, \Lambda^i).
\]

Note that for all $A, B \in K_i$ st $A- B$ is not in $G$, $A \ci_G B | pa_G(K_i) \cup K_i \setminus A \setminus B$ and thus $(\Lambda^i)^{-1}_{A,B} = 0$ \citep[Proposition 5.2]{Lauritzen1996}. Note also that for all $A \in K_i$ and $B \in pa_G(K_i)$ st $A \la B$ is not in $G$, $A \ci_G B | pa_G(A)$ and thus $(\beta^i)_{A,B}=0$. Let $\beta_A$ contain the nonzero elements of the vector $(\beta^i)_{A, \bullet}$. Then, $p$ can be expressed as a system of linear equations with correlated errors whose structure depends on $G$ as follows. For any $A \in K_i$,
\[
A = \beta_A \: pa_G(A) + \epsilon^A
\]
and for any other $B \in K_i$,
\[
covariance(\epsilon^A, \epsilon^B) = \Lambda^i_{A,B}.
\]

It is worth mentioning that the mapping above between probability distributions and systems of linear equations is bijective \citep[Section 5]{Anderssonetal.2001}. Note that no nodes in $G$ correspond to the errors $\epsilon^A$. Therefore, $G$ represent the errors implicitly. We propose to represent them explicitly. This can easily be done by transforming $G$ into what we call an EAMP CG $G'$ as follows:

\begin{table}[H]
\centering
\scalebox{1.0}{
\begin{tabular}{ll}
1 & Let $G'=G$\\
2 & For each node $A$ in $G$\\
3 & \hspace{0.3cm} Add the node $\epsilon^A$ to $G'$\\
4 & \hspace{0.3cm} Add the edge $\epsilon^A \ra A$ to $G'$\\
5 & For each edge $A - B$ in $G$\\
6 & \hspace{0.3cm} Add the edge $\epsilon^A - \epsilon^B$ to $G'$\\
7 & \hspace{0.3cm} Remove the edge $A - B$ from $G'$\\
\end{tabular}}
\end{table}

The transformation above basically consists in adding the error nodes $\epsilon^A$ to $G$ and connect them appropriately. Figure \ref{fig:example} shows an example. Note that every node $A \in V$ is determined by $pa_{G'}(A)$ and, what is more important in this paper, that $\epsilon^A$ is determined by $pa_{G'}(A) \setminus \epsilon^A \cup A$. Note also that, given $Z \subseteq V$, a node $A \in V$ is determined by $Z$ iff $A \in Z$. The if part is trivial. To see the only if part, note that $\epsilon^A \notin Z$ and thus $A$ cannot be determined by $Z$ unless $A \in Z$. Therefore, a node $\epsilon^A$ in $G'$ is determined by $Z$ iff $pa_{G'}(A) \setminus \epsilon^A \cup A \subseteq Z$ because, as shown, there is no other way for $Z$ to determine $pa_{G'}(A) \setminus \epsilon^A \cup A$ which, in turn, determine $\epsilon^A$. Let $\epsilon$ denote all the error nodes in $G'$. It is easy to see that $G'$ is an AMP CG over $V \cup \epsilon$ and, thus, its semantics are defined. The following theorem confirms that these semantics are as desired.

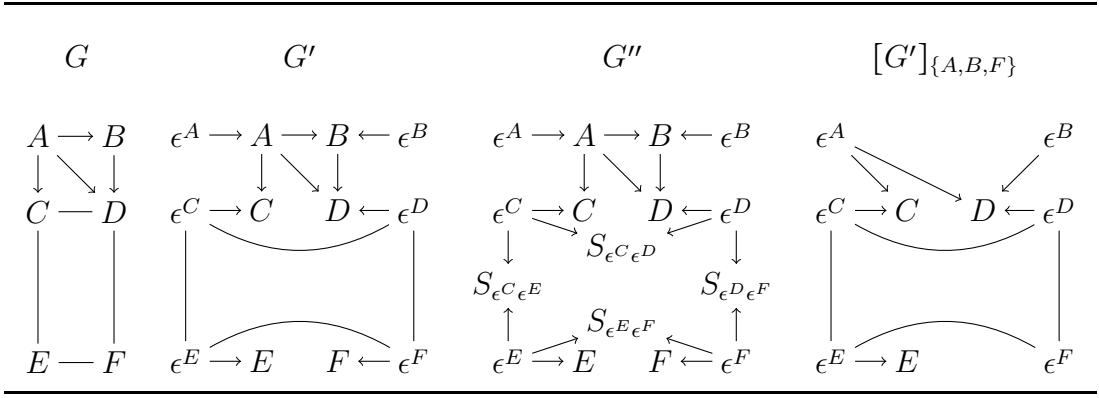
\begin{figure}
\centering
\scalebox{1.0}{
\begin{tabular}{cccc}\\
\hline
\\
$G$&$G'$&$G''$&$[G']_{\{A,B,F\}}$\\
\\
\begin{tikzpicture}[inner sep=1mm]
\node at (0,0) (A) {$A$};
\node at (1,0) (B) {$B$};
\node at (0,-1) (C) {$C$};
\node at (1,-1) (D) {$D$};
\node at (0,-3) (E) {$E$};
\node at (1,-3) (F) {$F$};
\path[->] (A) edge (B);
\path[->] (A) edge (C);
\path[->] (A) edge (D);
\path[->] (B) edge (D);
\path[-] (C) edge (D);
\path[-] (C) edge (E);
\path[-] (D) edge (F);
\path[-] (E) edge (F);
\end{tikzpicture}
&
\begin{tikzpicture}[inner sep=1mm]
\node at (0,0) (A) {$A$};
\node at (1,0) (B) {$B$};
\node at (0,-1) (C) {$C$};
\node at (1,-1) (D) {$D$};
\node at (0,-3) (E) {$E$};
\node at (1,-3) (F) {$F$};
\node at (-1,0) (EA) {$\epsilon^A$};
\node at (2,0) (EB) {$\epsilon^B$};
\node at (-1,-1) (EC) {$\epsilon^C$};
\node at (2,-1) (ED) {$\epsilon^D$};
\node at (-1,-3) (EE) {$\epsilon^E$};
\node at (2,-3) (EF) {$\epsilon^F$};
\path[->] (EA) edge (A);
\path[->] (EB) edge (B);
\path[->] (EC) edge (C);
\path[->] (ED) edge (D);
\path[->] (EE) edge (E);
\path[->] (EF) edge (F);
\path[->] (A) edge (B);
\path[->] (A) edge (C);
\path[->] (A) edge (D);
\path[->] (B) edge (D);
\path[-] (EC) edge [bend right] (ED);
\path[-] (EC) edge (EE);
\path[-] (ED) edge (EF);
\path[-] (EE) edge [bend left] (EF);
\end{tikzpicture}
&
\begin{tikzpicture}[inner sep=1mm]
\node at (0,0) (A) {$A$};
\node at (1,0) (B) {$B$};
\node at (0,-1) (C) {$C$};
\node at (1,-1) (D) {$D$};
\node at (0,-3) (E) {$E$};
\node at (1,-3) (F) {$F$};
\node at (-1,0) (EA) {$\epsilon^A$};
\node at (2,0) (EB) {$\epsilon^B$};
\node at (-1,-1) (EC) {$\epsilon^C$};
\node at (2,-1) (ED) {$\epsilon^D$};
\node at (-1,-3) (EE) {$\epsilon^E$};
\node at (2,-3) (EF) {$\epsilon^F$};
\node at (0.5,-1.5) (SCD) {$S_{\epsilon^C\epsilon^D}$};
\node at (-1,-2) (SCE) {$S_{\epsilon^C\epsilon^E}$};
\node at (2,-2) (SDF) {$S_{\epsilon^D\epsilon^F}$};
\node at (0.5,-2.5) (SEF) {$S_{\epsilon^E\epsilon^F}$};
\path[->] (EA) edge (A);
\path[->] (EB) edge (B);
\path[->] (EC) edge (C);
\path[->] (ED) edge (D);
\path[->] (EE) edge (E);
\path[->] (EF) edge (F);
\path[->] (A) edge (B);
\path[->] (A) edge (C);
\path[->] (A) edge (D);
\path[->] (B) edge (D);
\path[->] (EC) edge (SCE);
\path[->] (EE) edge (SCE);
\path[->] (ED) edge (SDF);
\path[->] (EF) edge (SDF);
\path[->] (EC) edge (SCD);
\path[->] (ED) edge (SCD);
\path[->] (EE) edge (SEF);
\path[->] (EF) edge (SEF);
\end{tikzpicture}
&
\begin{tikzpicture}[inner sep=1mm]
\node at (0,-1) (C) {$C$};
\node at (1,-1) (D) {$D$};
\node at (0,-3) (E) {$E$};
\node at (-1,0) (EA) {$\epsilon^A$};
\node at (2,0) (EB) {$\epsilon^B$};
\node at (-1,-1) (EC) {$\epsilon^C$};
\node at (2,-1) (ED) {$\epsilon^D$};
\node at (-1,-3) (EE) {$\epsilon^E$};
\node at (2,-3) (EF) {$\epsilon^F$};
\path[->] (EA) edge (C);
\path[->] (EA) edge (D);
\path[->] (EB) edge (D);
\path[->] (EC) edge (C);
\path[->] (ED) edge (D);
\path[->] (EE) edge (E);
\path[-] (EC) edge [bend right] (ED);
\path[-] (EC) edge (EE);
\path[-] (ED) edge (EF);
\path[-] (EE) edge [bend left] (EF);
\end{tikzpicture}\\
\hline
\end{tabular}}\caption{Example of the different transformations.}\label{fig:example}
\end{figure}

\begin{theorem}\label{the:GG'}
$I_{AMP}(G)=[I_{AMP}(G')]_\epsilon^\emptyset$.
\end{theorem}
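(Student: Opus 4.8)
The plan is to unfold $[I_{AMP}(G')]_\epsilon^\emptyset$ and reduce the claim to a statement about open paths. Since we marginalize the error nodes and condition on nothing, $X \ci_G Y \mid Z$ belongs to $[I_{AMP}(G')]_\epsilon^\emptyset$ precisely when $X, Y, Z$ are disjoint subsets of $V$ and $X \ci_{G'} Y \mid Z$ holds in $G'$ under the deterministic-node semantics. Hence it suffices to prove, for all disjoint $X, Y, Z \subseteq V$, that a $Z$-open path between $X$ and $Y$ exists in $G$ iff one exists in $G'$. I would establish the two directions by a lifting map (an open path in $G$ produces an open path in $G'$, which by contraposition yields $I_{AMP}(G) \supseteq [I_{AMP}(G')]_\epsilon^\emptyset$) and a projection map (an open path in $G'$ produces an open path in $G$, yielding the reverse inclusion). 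Throughout I use the determination facts already recorded: for $Z \subseteq V$, a node $A \in V$ lies in $D(Z)$ iff $A \in Z$, and $\epsilon^A \in D(Z)$ iff $A \cup pa_G(A) \subseteq Z$.

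Three observations make the open-path conditions on the two graphs match. First, no error node is ever a triplex node in $G'$: the only edges at $\epsilon^A$ are $\epsilon^A \ra A$ and undirected edges $\epsilon^A - \epsilon^B$, so $\epsilon^A$ never carries an incoming arrowhead, and every triplex node of a $G'$-path is therefore a node of $V$ of the form $\cdot \ra B \la \cdot$. Second, a strictly descending route can neither pass through nor terminate at an error node, so $san_{G'}(D(Z)) \cap V = san_G(Z)$; together with $D(Z) \cap V = Z$ this shows that the triplex condition ``$B \in D(Z) \cup san_{G'}(D(Z))$'' for a $V$-node in $G'$ is word for word the condition ``$B \in Z \cup san_G(Z)$'' in $G$. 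Third, because $pa_{G'}(\epsilon^A) = \emptyset$, the non-triplex condition for an error node reduces to $\epsilon^A \notin D(Z)$, which by the determination fact reads $A \notin Z$ or $pa_G(A) \not\subseteq Z$; this is exactly the AMP exception permitting a non-triplex node $A$ of $G$ on an undirected subpath to lie in $Z$.

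For the lifting map I would take a $Z$-open path $\rho$ in $G$ and route each maximal undirected subpath $v_i - \cdots - v_j$ of $\rho$ through the error layer as $v_i \la \epsilon^{v_i} - \cdots - \epsilon^{v_j} \ra v_j$, keeping all directed edges of $\rho$ unchanged. The resulting nodes are pairwise distinct, so the image $\rho'$ is a genuine path in $G'$. By the three observations, each triplex node of $\rho'$ (necessarily in $V$) inherits its condition verbatim from $\rho$, and each interior error node inherits the undirected-interior condition verbatim. The one delicate point, which I expect to be the main obstacle, is the boundary error node $\epsilon^{v_i}$ created where a directed edge of $\rho$ meets an undirected subpath: openness of $\rho'$ requires $\epsilon^{v_i} \notin D(Z)$, i.e. $pa_G(v_i) \not\subseteq Z$ whenever $v_i \in Z$. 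This does not follow from the construction alone, but it follows from openness of $\rho$: if $v_i \in Z$ then $v_i$ must be a triplex node of $\rho$ (a non-triplex node in $Z$ would block $\rho$), so the flank is $v_{i-1} \ra v_i$ with $v_{i-1} \in pa_G(v_i)$; as $v_{i-1}$ carries an outgoing arrow it is a non-triplex, non-undirected-interior node of $\rho$ or an endpoint, hence $v_{i-1} \notin Z$ and $pa_G(v_i) \not\subseteq Z$.

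For the projection map, since an error node's only neighbour in $V$ is reached via $\epsilon^A \ra A$, every maximal error-node subpath of a $Z$-open path $\rho'$ in $G'$ has the form $a_1 \la \epsilon^{a_1} - \cdots - \epsilon^{a_l} \ra a_l$ with $a_1, a_l \in V$; I would replace each such segment by the undirected subpath $a_1 - \cdots - a_l$ of $G$ and keep all directed edges, obtaining a $Z$-open walk in $G$ whose conditions transfer node by node via the three observations. The remaining technical step is to extract an actual $Z$-open path from this walk, because the interior nodes $a_2, \ldots, a_{l-1}$ reintroduced by the projection need not be distinct from the other nodes of the walk; I would dispatch this by the standard shortcutting argument that an open walk between $X$ and $Y$ implies an open path, checking that excising a repeated node preserves the triplex, $san$, and non-triplex conditions. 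Combining the lifting and projection maps gives the desired equivalence of open paths, and hence $I_{AMP}(G) = [I_{AMP}(G')]_\epsilon^\emptyset$.
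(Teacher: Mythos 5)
Your proposal follows essentially the same route as the paper's proof: the same reduction to a correspondence of $Z$-open paths, the same lifting of maximal undirected subpaths through the error layer, the same projection back, the same determination facts, and the same resolution of the delicate boundary case (your argument that the flank $v_{i-1} \rightarrow v_i$ forces $v_{i-1} \notin Z$ and hence $pa_G(v_i) \not\subseteq Z$ is precisely the paper's case analysis for the configurations $A \rightarrow B \leftarrow \epsilon^B - \epsilon^C$ and their mirror images). Where you go beyond the paper is in noticing that the projection of a path in $G'$ need not be a path in $G$, and this concern is genuine: the paper silently asserts that the projected object is a path, which is false. For example, let $G$ have edges $P \rightarrow v$, $Q \rightarrow v$, $R \rightarrow Q$, $R \rightarrow D$, $D - v$, $v - E$. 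Then $\rho' = P \rightarrow v \leftarrow Q \leftarrow R \rightarrow D \leftarrow \epsilon^D - \epsilon^v - \epsilon^E \rightarrow E$ is a $\{v,D\}$-open path in $G'$ (the triplex nodes $v$ and $D$ lie in $Z$; the non-triplex nodes $Q$, $R$ lie outside $Z$; and $\epsilon^D$, $\epsilon^v$, $\epsilon^E$ are undetermined because $R \notin Z$, $P \notin Z$ and $E \notin Z$ respectively), yet its projection $P \rightarrow v \leftarrow Q \leftarrow R \rightarrow D - v - E$ repeats $v$ and so is a walk, not a path.

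However, your proposed repair --- the ``standard shortcutting argument \ldots{} checking that excising a repeated node preserves the triplex, $san$, and non-triplex conditions'' --- does not survive the very check you prescribe. If the walk contains an occurrence $P \rightarrow v \rightarrow Q$ (non-triplex, so walk-openness only yields $v \notin Z$) before an occurrence $a - v - c$, excising the portion between them produces $P \rightarrow v - c$, in which $v$ is now a triplex node and must lie in $Z \cup san_G(Z)$; membership in $san_G(Z)$ is not implied by openness of the walk, so excision can destroy openness. The sound repair is the device the paper itself uses in the proof of Theorem 4: switch to the route-based characterization of AMP separation (Andersson et al., Remark 3.1), under which triplex occurrences must lie in $D(Z)$ and non-triplex occurrences must not, and which is equivalent to the path-based definition. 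With routes no path ever needs to be extracted: project a $Z$-open route of $G'$ to a route of $G$, and whenever a projected interior node $B$ of an undirected segment lies in $Z$ (which openness of $\rho'$ permits only when some $P \in pa_G(B) \setminus Z$ exists, since $\epsilon^B \notin D(Z)$), insert the detour $B \leftarrow P \rightarrow B$ so that both occurrences of $B$ become triplex occurrences inside $D(Z)$. With that substitution your argument closes; as written, the projection step has a gap --- though it is a gap the paper's own proof shares, and your proposal deserves credit for detecting the problem at all.
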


\begin{proof}
It suffices to show that every $Z$-open path between $\alpha$ and $\beta$ in $G$ can be transformed into a $Z$-open path between $\alpha$ and $\beta$ in $G'$ and vice versa, with $\alpha, \beta \in V$ and $Z \subseteq V \setminus \alpha \setminus \beta$.

Let $\rho$ denote a $Z$-open path between $\alpha$ and $\beta$ in $G$. We can easily transform $\rho$ into a path $\rho'$ between $\alpha$ and $\beta$ in $G'$: Simply, replace every maximal subpath of $\rho$ of the form $V_1 - V_2 - \ldots - V_{n-1} - V_n$ ($n \geq 2$) with $V_1 \la \epsilon^{V_1} - \epsilon^{V_2} - \ldots - \epsilon^{V_{n-1}} - \epsilon^{V_n} \ra V_n$. We now show that $\rho'$ is $Z$-open.

First, if $B \in V$ is a triplex node in $\rho'$, then $\rho'$ must have one of the following subpaths:

\begin{table}[H]
\centering
\scalebox{1.0}{
\begin{tabular}{c}
\begin{tikzpicture}[inner sep=1mm]
\node at (0,0) (A) {$A$};
\node at (1,0) (B) {$B$};
\node at (2,0) (C) {$C$};
\path[->] (A) edge (B);
\path[<-] (B) edge (C);
\end{tikzpicture}
\begin{tikzpicture}[inner sep=1mm]
\node at (0,0) (A) {$A$};
\node at (1,0) (B) {$B$};
\node at (2,0) (C) {$\epsilon^B$};
\node at (3,0) (D) {$\epsilon^C$};
\path[->] (A) edge (B);
\path[<-] (B) edge (C);
\path[-] (D) edge (C);
\end{tikzpicture}
\begin{tikzpicture}[inner sep=1mm]
\node at (0,0) (A) {$\epsilon^B$};
\node at (1,0) (B) {$B$};
\node at (2,0) (C) {$C$};
\node at (-1,0) (D) {$\epsilon^A$};
\path[->] (A) edge (B);
\path[<-] (B) edge (C);
\path[-] (D) edge (A);
\end{tikzpicture}
\end{tabular}}
\end{table}

with $A, C \in V$. Therefore, $\rho$ must have one of the following subpaths (specifically, if $\rho'$ has the $i$-th subpath above, then $\rho$ has the $i$-th subpath below):

\begin{table}[H]
\centering
\scalebox{1.0}{
\begin{tabular}{c}
\begin{tikzpicture}[inner sep=1mm]
\node at (0,0) (A) {$A$};
\node at (1,0) (B) {$B$};
\node at (2,0) (C) {$C$};
\path[->] (A) edge (B);
\path[<-] (B) edge (C);
\end{tikzpicture}
\begin{tikzpicture}[inner sep=1mm]
\node at (0,0) (A) {$A$};
\node at (1,0) (B) {$B$};
\node at (2,0) (C) {$C$};
\path[->] (A) edge (B);
\path[-] (B) edge (C);
\end{tikzpicture}
\begin{tikzpicture}[inner sep=1mm]
\node at (0,0) (A) {$A$};
\node at (1,0) (B) {$B$};
\node at (2,0) (C) {$C$};
\path[-] (A) edge (B);
\path[<-] (B) edge (C);
\end{tikzpicture}
\end{tabular}}
\end{table}

In either case, $B$ is a triplex node in $\rho$ and, thus, $B \in Z \cup san_G(Z)$ for $\rho$ to be $Z$-open. Then, $B \in Z \cup san_{G'}(Z)$ by construction of $G'$ and, thus, $B \in D(Z) \cup san_{G'}(D(Z))$.

Second, if $B \in V$ is a non-triplex node in $\rho'$, then $\rho'$ must have one of the following subpaths:

\begin{table}[H]
\centering
\scalebox{1.0}{
\begin{tabular}{c}
\begin{tikzpicture}[inner sep=1mm]
\node at (0,0) (A) {$A$};
\node at (1,0) (B) {$B$};
\node at (2,0) (C) {$C$};
\path[->] (A) edge (B);
\path[->] (B) edge (C);
\end{tikzpicture}
\begin{tikzpicture}[inner sep=1mm]
\node at (0,0) (A) {$A$};
\node at (1,0) (B) {$B$};
\node at (2,0) (C) {$C$};
\path[<-] (A) edge (B);
\path[->] (B) edge (C);
\end{tikzpicture}
\begin{tikzpicture}[inner sep=1mm]
\node at (0,0) (A) {$A$};
\node at (1,0) (B) {$B$};
\node at (2,0) (C) {$C$};
\path[<-] (A) edge (B);
\path[<-] (B) edge (C);
\end{tikzpicture}
\begin{tikzpicture}[inner sep=1mm]
\node at (0,0) (A) {$A$};
\node at (1,0) (B) {$B$};
\node at (2,0) (C) {$\epsilon^B$};
\node at (3,0) (D) {$\epsilon^C$};
\path[<-] (A) edge (B);
\path[<-] (B) edge (C);
\path[-] (D) edge (C);
\end{tikzpicture}
\begin{tikzpicture}[inner sep=1mm]
\node at (0,0) (A) {$\epsilon^B$};
\node at (1,0) (B) {$B$};
\node at (2,0) (C) {$C$};
\node at (-1,0) (D) {$\epsilon^A$};
\path[->] (A) edge (B);
\path[->] (B) edge (C);
\path[-] (D) edge (A);
\end{tikzpicture}
\end{tabular}}
\end{table}

with $A, C \in V$. Therefore, $\rho$ must have one of the following subpaths (specifically, if $\rho'$ has the $i$-th subpath above, then $\rho$ has the $i$-th subpath below):

\begin{table}[H]
\centering
\scalebox{1.0}{
\begin{tabular}{c}
\begin{tikzpicture}[inner sep=1mm]
\node at (0,0) (A) {$A$};
\node at (1,0) (B) {$B$};
\node at (2,0) (C) {$C$};
\path[->] (A) edge (B);
\path[->] (B) edge (C);
\end{tikzpicture}
\begin{tikzpicture}[inner sep=1mm]
\node at (0,0) (A) {$A$};
\node at (1,0) (B) {$B$};
\node at (2,0) (C) {$C$};
\path[<-] (A) edge (B);
\path[->] (B) edge (C);
\end{tikzpicture}
\begin{tikzpicture}[inner sep=1mm]
\node at (0,0) (A) {$A$};
\node at (1,0) (B) {$B$};
\node at (2,0) (C) {$C$};
\path[<-] (A) edge (B);
\path[<-] (B) edge (C);
\end{tikzpicture}
\begin{tikzpicture}[inner sep=1mm]
\node at (0,0) (A) {$A$};
\node at (1,0) (B) {$B$};
\node at (2,0) (C) {$C$};
\path[<-] (A) edge (B);
\path[-] (B) edge (C);
\end{tikzpicture}
\begin{tikzpicture}[inner sep=1mm]
\node at (0,0) (A) {$A$};
\node at (1,0) (B) {$B$};
\node at (2,0) (C) {$C$};
\path[-] (A) edge (B);
\path[->] (B) edge (C);
\end{tikzpicture}
\end{tabular}}
\end{table}

In either case, $B$ is a non-triplex node in $\rho$ and, thus, $B \notin Z$ for $\rho$ to be $Z$-open. Since $Z$ contains no error node, $Z$ cannot determine any node in $V$ that is not already in $Z$. Then, $B \notin D(Z)$.

Third, if $\epsilon^B$ is a non-triplex node in $\rho'$ (note that $\epsilon^B$ cannot be a triplex node in $\rho'$), then $\rho'$ must have one of the following subpaths:

\begin{table}[H]
\centering
\scalebox{1.0}{
\begin{tabular}{c}
\begin{tikzpicture}[inner sep=1mm]
\node at (0,0) (A) {$A$};
\node at (1,0) (B) {$B$};
\node at (2,0) (C) {$\epsilon^B$};
\node at (3,0) (D) {$\epsilon^C$};
\path[->] (A) edge (B);
\path[<-] (B) edge (C);
\path[-] (D) edge (C);
\end{tikzpicture}
\begin{tikzpicture}[inner sep=1mm]
\node at (0,0) (A) {$\epsilon^B$};
\node at (1,0) (B) {$B$};
\node at (2,0) (C) {$C$};
\node at (-1,0) (D) {$\epsilon^A$};
\path[->] (A) edge (B);
\path[<-] (B) edge (C);
\path[-] (D) edge (A);
\end{tikzpicture}
\begin{tikzpicture}[inner sep=1mm]
\node at (0.65,0) (B) {$\alpha=B$};
\node at (2,0) (C) {$\epsilon^B$};
\node at (3,0) (D) {$\epsilon^C$};
\path[<-] (B) edge (C);
\path[-] (D) edge (C);
\end{tikzpicture}
\begin{tikzpicture}[inner sep=1mm]
\node at (0,0) (A) {$\epsilon^B$};
\node at (1.35,0) (B) {$B=\beta$};
\node at (-1,0) (D) {$\epsilon^A$};
\path[->] (A) edge (B);
\path[-] (D) edge (A);
\end{tikzpicture}\\
\begin{tikzpicture}[inner sep=1mm]
\node at (0,0) (A) {$A$};
\node at (1,0) (B) {$B$};
\node at (2,0) (C) {$\epsilon^B$};
\node at (3,0) (D) {$\epsilon^C$};
\path[<-] (A) edge (B);
\path[<-] (B) edge (C);
\path[-] (D) edge (C);
\end{tikzpicture}
\begin{tikzpicture}[inner sep=1mm]
\node at (0,0) (A) {$\epsilon^B$};
\node at (1,0) (B) {$B$};
\node at (2,0) (C) {$C$};
\node at (-1,0) (D) {$\epsilon^A$};
\path[->] (A) edge (B);
\path[->] (B) edge (C);
\path[-] (D) edge (A);
\end{tikzpicture}
\begin{tikzpicture}[inner sep=1mm]
\node at (0,0) (A) {$\epsilon^A$};
\node at (1,0) (B) {$\epsilon^B$};
\node at (2,0) (C) {$\epsilon^C$};
\path[-] (A) edge (B);
\path[-] (B) edge (C);
\end{tikzpicture}
\end{tabular}}
\end{table}

with $A, C \in V$. Recall that $\epsilon^B \notin Z$ because $Z \subseteq V \setminus \alpha \setminus \beta$. In the first case, if $\alpha=A$ then $A \notin Z$, else $A \notin Z$ for $\rho$ to be $Z$-open. Then, $\epsilon^B \notin D(Z)$. In the second case, if $\beta=C$ then $C \notin Z$, else $C \notin Z$ for $\rho$ to be $Z$-open. Then, $\epsilon^B \notin D(Z)$. In the third and fourth cases, $B \notin Z$ because $\alpha=B$ or $\beta=B$. Then, $\epsilon^B \notin D(Z)$. In the fifth and sixth cases, $B \notin Z$ for $\rho$ to be $Z$-open. Then, $\epsilon^B \notin D(Z)$. The last case implies that $\rho$ has the following subpath:

\begin{table}[H]
\centering
\scalebox{1.0}{
\begin{tabular}{c}
\begin{tikzpicture}[inner sep=1mm]
\node at (0,0) (A) {$A$};
\node at (1,0) (B) {$B$};
\node at (2,0) (C) {$C$};
\path[-] (A) edge (B);
\path[-] (B) edge (C);
\end{tikzpicture}
\end{tabular}}
\end{table}

Thus, $B$ is a non-triplex node in $\rho$, which implies that $B \notin Z$ or $pa_G(B) \setminus Z \neq \emptyset$ for $\rho$ to be $Z$-open. In either case, $\epsilon^B \notin D(Z)$ (recall that $pa_{G'}(B)=pa_G(B) \cup \epsilon^B$ by construction of $G'$).

Finally, let $\rho'$ denote a $Z$-open path between $\alpha$ and $\beta$ in $G'$. We can easily transform $\rho'$ into a path $\rho$ between $\alpha$ and $\beta$ in $G$: Simply, replace every maximal subpath of $\rho'$ of the form $V_1 \la \epsilon^{V_1} - \epsilon^{V_2} - \ldots - \epsilon^{V_{n-1}} - \epsilon^{V_n} \ra V_n$ ($n \geq 2$) with $V_1 - V_2 - \ldots - V_{n-1} - V_n$. We now show that $\rho$ is $Z$-open.

First, note that all the nodes in $\rho$ are in $V$. Moreover, if $B$ is a triplex node in $\rho$, then $\rho$ must have one of the following subpaths:

\begin{table}[H]
\centering
\scalebox{1.0}{
\begin{tabular}{c}
\begin{tikzpicture}[inner sep=1mm]
\node at (0,0) (A) {$A$};
\node at (1,0) (B) {$B$};
\node at (2,0) (C) {$C$};
\path[->] (A) edge (B);
\path[<-] (B) edge (C);
\end{tikzpicture}
\begin{tikzpicture}[inner sep=1mm]
\node at (0,0) (A) {$A$};
\node at (1,0) (B) {$B$};
\node at (2,0) (C) {$C$};
\path[->] (A) edge (B);
\path[-] (B) edge (C);
\end{tikzpicture}
\begin{tikzpicture}[inner sep=1mm]
\node at (0,0) (A) {$A$};
\node at (1,0) (B) {$B$};
\node at (2,0) (C) {$C$};
\path[-] (A) edge (B);
\path[<-] (B) edge (C);
\end{tikzpicture}
\end{tabular}}
\end{table}

with $A, C \in V$. Therefore, $\rho'$ must have one of the following subpaths (specifically, if $\rho$ has the $i$-th subpath above, then $\rho'$ has the $i$-th subpath below):

\begin{table}[H]
\centering
\scalebox{1.0}{
\begin{tabular}{c}
\begin{tikzpicture}[inner sep=1mm]
\node at (0,0) (A) {$A$};
\node at (1,0) (B) {$B$};
\node at (2,0) (C) {$C$};
\path[->] (A) edge (B);
\path[<-] (B) edge (C);
\end{tikzpicture}
\begin{tikzpicture}[inner sep=1mm]
\node at (0,0) (A) {$A$};
\node at (1,0) (B) {$B$};
\node at (2,0) (C) {$\epsilon^B$};
\node at (3,0) (D) {$\epsilon^C$};
\path[->] (A) edge (B);
\path[<-] (B) edge (C);
\path[-] (D) edge (C);
\end{tikzpicture}
\begin{tikzpicture}[inner sep=1mm]
\node at (0,0) (A) {$\epsilon^B$};
\node at (1,0) (B) {$B$};
\node at (2,0) (C) {$C$};
\node at (-1,0) (D) {$\epsilon^A$};
\path[->] (A) edge (B);
\path[<-] (B) edge (C);
\path[-] (D) edge (A);
\end{tikzpicture}
\end{tabular}}
\end{table}

In either case, $B$ is a triplex node in $\rho'$ and, thus, $B \in D(Z) \cup san_{G'}(D(Z))$ for $\rho'$ to be $Z$-open. Since $Z$ contains no error node, $Z$ cannot determine any node in $V$ that is not already in $Z$. Then, $B \in D(Z)$ iff $B \in Z$. Since there is no strictly descending route from $B$ to any error node, then any strictly descending route from $B$ to a node $D \in D(Z)$ implies that $D \in V$ which, as seen, implies that $D \in Z$. Then, $B \in san_{G'}(D(Z))$ iff $B \in san_{G'}(Z)$. Moreover, $B \in san_{G'}(Z)$ iff $B \in san_{G}(Z)$ by construction of $G'$. These results together imply that $B \in Z \cup san_{G}(Z)$.

Second, if $B$ is a non-triplex node in $\rho$, then $\rho$ must have one of the following subpaths:

\begin{table}[H]
\centering
\scalebox{1.0}{
\begin{tabular}{c}
\begin{tikzpicture}[inner sep=1mm]
\node at (0,0) (A) {$A$};
\node at (1,0) (B) {$B$};
\node at (2,0) (C) {$C$};
\path[->] (A) edge (B);
\path[->] (B) edge (C);
\end{tikzpicture}
\begin{tikzpicture}[inner sep=1mm]
\node at (0,0) (A) {$A$};
\node at (1,0) (B) {$B$};
\node at (2,0) (C) {$C$};
\path[<-] (A) edge (B);
\path[->] (B) edge (C);
\end{tikzpicture}
\begin{tikzpicture}[inner sep=1mm]
\node at (0,0) (A) {$A$};
\node at (1,0) (B) {$B$};
\node at (2,0) (C) {$C$};
\path[<-] (A) edge (B);
\path[<-] (B) edge (C);
\end{tikzpicture}
\begin{tikzpicture}[inner sep=1mm]
\node at (0,0) (A) {$A$};
\node at (1,0) (B) {$B$};
\node at (2,0) (C) {$C$};
\path[<-] (A) edge (B);
\path[-] (B) edge (C);
\end{tikzpicture}
\begin{tikzpicture}[inner sep=1mm]
\node at (0,0) (A) {$A$};
\node at (1,0) (B) {$B$};
\node at (2,0) (C) {$C$};
\path[-] (A) edge (B);
\path[->] (B) edge (C);
\end{tikzpicture}
\begin{tikzpicture}[inner sep=1mm]
\node at (0,0) (A) {$A$};
\node at (1,0) (B) {$B$};
\node at (2,0) (C) {$C$};
\path[-] (A) edge (B);
\path[-] (B) edge (C);
\end{tikzpicture}
\end{tabular}}
\end{table}

with $A, C \in V$. Therefore, $\rho'$ must have one of the following subpaths (specifically, if $\rho$ has the $i$-th subpath above, then $\rho'$ has the $i$-th subpath below):

\begin{table}[H]
\centering
\scalebox{1.0}{
\begin{tabular}{c}
\begin{tikzpicture}[inner sep=1mm]
\node at (0,0) (A) {$A$};
\node at (1,0) (B) {$B$};
\node at (2,0) (C) {$C$};
\path[->] (A) edge (B);
\path[->] (B) edge (C);
\end{tikzpicture}
\begin{tikzpicture}[inner sep=1mm]
\node at (0,0) (A) {$A$};
\node at (1,0) (B) {$B$};
\node at (2,0) (C) {$C$};
\path[<-] (A) edge (B);
\path[->] (B) edge (C);
\end{tikzpicture}
\begin{tikzpicture}[inner sep=1mm]
\node at (0,0) (A) {$A$};
\node at (1,0) (B) {$B$};
\node at (2,0) (C) {$C$};
\path[<-] (A) edge (B);
\path[<-] (B) edge (C);
\end{tikzpicture}
\begin{tikzpicture}[inner sep=1mm]
\node at (0,0) (A) {$A$};
\node at (1,0) (B) {$B$};
\node at (2,0) (C) {$\epsilon^B$};
\node at (3,0) (D) {$\epsilon^C$};
\path[<-] (A) edge (B);
\path[<-] (B) edge (C);
\path[-] (D) edge (C);
\end{tikzpicture}
\begin{tikzpicture}[inner sep=1mm]
\node at (0,0) (A) {$\epsilon^B$};
\node at (1,0) (B) {$B$};
\node at (2,0) (C) {$C$};
\node at (-1,0) (D) {$\epsilon^A$};
\path[->] (A) edge (B);
\path[->] (B) edge (C);
\path[-] (D) edge (A);
\end{tikzpicture}\\
\begin{tikzpicture}[inner sep=1mm]
\node at (0,0) (A) {$\epsilon^A$};
\node at (1,0) (B) {$\epsilon^B$};
\node at (2,0) (C) {$\epsilon^C$};
\path[-] (A) edge (B);
\path[-] (B) edge (C);
\end{tikzpicture}
\end{tabular}}
\end{table}

In the first five cases, $B$ is a non-triplex node in $\rho'$ and, thus, $B \notin D(Z)$ for $\rho'$ to be $Z$-open. Since $Z$ contains no error node, $Z$ cannot determine any node in $V$ that is not already in $Z$. Then, $B \notin Z$. In the last case, $\epsilon^B$ is a non-triplex node in $\rho'$ and, thus, $\epsilon^B \notin D(Z)$ for $\rho'$ to be $Z$-open. Then, $B \notin Z$ or $pa_{G'}(B) \setminus \epsilon^B \setminus Z \ \neq \emptyset$. Then, $B \notin Z$ or $pa_{G}(B) \setminus Z \ \neq \emptyset$ (recall that $pa_{G'}(B)=pa_G(B) \cup \epsilon^B$ by construction of $G'$).
\end{proof}

\begin{theorem}\label{the:G'G'}
Assume that $G'$ has the same deterministic relationships no matter whether it is interpreted as an AMP or LWF CG. Then, $I_{AMP}(G')=I_{LWF}(G')$.
\end{theorem}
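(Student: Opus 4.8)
The plan is to exploit the special shape of $G'$: by construction every undirected edge of $G'$ joins two error nodes, every error node is a source (its only incident directed edge is the outgoing $\epsilon^A \ra A$, together with possible undirected edges to other error nodes), and every $A \in V$ is incident only to directed edges, with $\epsilon^A$ among its parents. Consequently each connectivity component of $G'$ is either a singleton $\{A\}$ with $A \in V$ or a parentless set of error nodes, and $G'$ contains no flag, i.e. no subpath $A \ra B - C$ (such a subpath would force an arrowhead into the error node $B$, which is impossible). From this I would read off the objects that drive the two criteria. A node is a triplex node of a path (AMP) exactly when it is some $B \in V$ occurring as $A \ra B \la C$: error nodes are never triplex nodes, and no node of $V$ sits on an undirected edge. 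A collider section of a route (LWF) is necessarily a single node $B \in V$ occurring as $A \ra B \la C$, because a multi-node undirected section consists of error nodes and can never receive an arrowhead at either end. Hence the AMP triplex nodes and the LWF collider sections are literally the same configurations.

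Next I would reduce the two opening conditions on this structure, using the hypothesis that $G'$ has the same deterministic relationships under both interpretations so that a single determined set $D(Z)$ may be used throughout. The AMP exception clause is vacuous in $G'$: a subpath $A - B - C$ forces $B$ to be an error node, and an error node has $pa_{G'}(B) = \emptyset$, so ``some node in $pa_{G'}(B)$ is not in $D(Z)$'' can never hold. Thus the AMP criterion becomes: every triplex node $B$ (a $V$-collider) lies in $D(Z) \cup san_{G'}(D(Z))$, and no non-triplex node lies in $D(Z)$. On the LWF side, an undirected (hence error-node) non-collider section is blocked exactly when it meets $D(Z)$, which matches the AMP treatment of those error nodes one by one, while a directed non-collider $V$-node is a singleton non-collider section blocked exactly when it lies in $D(Z)$, again matching AMP. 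So the non-collider parts of the two criteria already agree, and the only residual discrepancy is in how a collider is opened.

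The heart of the argument, and the step I expect to be the main obstacle, is reconciling the collider-opening rules: AMP opens a collider $B$ whenever $B \in D(Z) \cup san_{G'}(D(Z))$, that is whenever $B \in D(Z)$ or $B$ has a directed descendant in $D(Z)$, whereas LWF requires the collider section itself to meet $D(Z)$ but may route through a descendant. I would prove the resulting separations equal by converting open objects in both directions. Given an AMP $Z$-open path, each collider $B$ with $B \notin D(Z)$ but $B \in san_{G'}(D(Z))$ is opened by splicing in a minimal directed detour $B \ra W_1 \ra \cdots \ra W_m$ with $W_m \in D(Z)$ and $W_1, \ldots, W_{m-1} \notin D(Z)$, traversed down and back up: the apex $W_m$ becomes an opened collider section, the intermediate $W_i$ become non-collider sections outside $D(Z)$ by minimality, and colliders already in $D(Z)$ are kept as they are, yielding an LWF $Z$-open route. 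Conversely, given an LWF $Z$-open route I would contract it to a path between the same endpoints; a collider section met by $D(Z)$ through a descending detour collapses to a collider $B$ with $B \in san_{G'}(D(Z))$, while every non-collider section remains outside $D(Z)$, producing an AMP $Z$-open path. The delicate points are checking that these conversions preserve openness at every section simultaneously, and that the route-to-path contraction neither creates a blocked non-collider nor destroys an opened collider; the deterministic bookkeeping, kept consistent by the standing hypothesis on $D(Z)$, is precisely what makes these checks go through.
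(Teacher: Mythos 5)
Your proof takes a genuinely different route from the paper's. The paper never manipulates paths or routes at all: it observes that $G'$ has no flags, hence no biflags, invokes Corollary 1 of \citet{Anderssonetal.2001} to conclude that the AMP and LWF Markov properties of $G'$ coincide on probability distributions, transfers this to independence models via the existence of Gaussian distributions faithful to $G'$ under either interpretation \citep{Levitzetal.2001,Penna2011}, and only then handles determinism by substituting $D(Z)$ for $Z$. Your argument is instead purely graphical and self-contained, in the spirit of the paper's proofs of Theorems \ref{the:GG'}, \ref{the:G'G''} and \ref{the:closed}. Your structural reductions are correct: error nodes are sources, $G'$ has no flags, AMP triplex nodes and LWF collider sections in $G'$ are exactly the configurations $A \ra B \la C$ with $B \in V$, and the AMP exception clause is vacuous because error nodes have no parents. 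The AMP-to-LWF conversion by splicing a minimal descending detour at each triplex node in $san_{G'}(D(Z)) \setminus D(Z)$ is also sound for interior nodes.

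There are, however, two genuine gaps. First, the converse direction is only asserted: ``contract it to a path'' is precisely the nontrivial lemma, and what makes it work is not ``deterministic bookkeeping'' but acyclicity together with the structure of $G'$. A workable version inducts on the number of repeated node occurrences with an invariant \emph{weaker} than LWF-openness (every collider section lies in $D(Z) \cup san_{G'}(D(Z))$, every non-collider section avoids $D(Z)$), cutting between the first and last occurrence of a repeated node $N$: when the cut turns $N$ into a collider, the excised segment either leaves $N$ against an arrow (so $N$ was already a collider, covered by the invariant) or leaves along a directed edge, in which case the segment must reverse direction before returning to $N$ (no semidirected cycles, and all descendants of $N$ lie in $V$, so no undirected edge can intervene), producing a collider section strictly below $N$ and hence $N \in san_{G'}(D(Z))$. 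Alternatively, you can bypass contraction entirely by adopting the route-based AMP separation criterion \citep[Remark 3.1]{Anderssonetal.2001}, which the paper itself uses to prove Theorem \ref{the:closed}; under it the AMP and LWF route criteria coincide verbatim on $G'$. Second, your claim that ``the non-collider parts of the two criteria already agree'' silently fails at endpoints: in LWF an endpoint belongs to a (necessarily non-collider) section that must avoid $D(Z)$, whereas the AMP criterion as stated in the paper constrains only interior nodes. With deterministic nodes this is a real discrepancy: for $G$ with $C \ra A$ and $A - B$, take $Z=\{A,C\}$, so $\epsilon^A \in D(Z)$; the path $\epsilon^A - \epsilon^B \ra B$ is then AMP $Z$-open (its only interior node $\epsilon^B$ is not in $D(Z)$), while every LWF route out of $\epsilon^A$ is blocked because its first section contains $\epsilon^A \in D(Z)$. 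To make your equivalence (and indeed the theorem as literally stated) hold, you must either read path endpoints as non-triplex nodes or restrict attention to endpoints not determined by $Z$ --- a subtlety the paper's own proof also glosses over, since its substitution of $D(Z)$ for $Z$ is ill-defined when $X$ or $Y$ meets $D(Z)$.
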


\begin{proof}
Assume for a moment that $G'$ has no deterministic node. Note that $G'$ has no induced subgraph of the form $A \ra B - C$ with $A, B, C \in V \cup \epsilon$. Such an induced subgraph is called a flag by \citet[pp. 40-41]{Anderssonetal.2001}. They also introduce the term biflag, whose definition is irrelevant here. What is relevant here is the observation that a CG cannot have a biflag unless it has some flag. Therefore, $G'$ has no biflags. Consequently, every probability distribution that is Markovian wrt $G'$ when interpreted as an AMP CG is also Markovian wrt $G'$ when interpreted as a LWF CG and vice versa \citep[Corollary 1]{Anderssonetal.2001}. Now, note that there are Gaussian probability distributions that are faithful to $G'$ when interpreted as an AMP CG \citep[Theorem 6.1]{Levitzetal.2001} as well as when interpreted as a LWF CG \citep[Theorems 1 and 2]{Penna2011}. Therefore, $I_{AMP}(G')=I_{LWF}(G')$. We denote this independence model by $I_{NDN}(G')$.

Now, forget the momentary assumption made above that $G'$ has no deterministic node. Recall that we assumed that $D(Z)$ is the same under the AMP and the LWF interpretations of $G'$ for all $Z \subseteq V \cup \epsilon$. Recall also that, from the point of view of the separations in an AMP or LWF CG, that a node is determined by the conditioning set has the same effect as if the node were in the conditioning set. Then, $X \ci_{G'} Y | Z$ is in $I_{AMP}(G')$ iff $X \ci_{G'} Y | D(Z)$ is in $I_{NDN}(G')$ iff $X \ci_{G'} Y | Z$ is in $I_{LWF}(G')$. Then, $I_{AMP}(G')=I_{LWF}(G')$.
\end{proof}

The first major result of this paper is the following corollary, which shows that every AMP CG is Markov equivalent to some LWF CG under marginalization. The corollary follows from Theorems \ref{the:GG'} and \ref{the:G'G'}.

\begin{corollary}\label{cor:GG'}
$I_{AMP}(G)=[I_{LWF}(G')]_\epsilon^\emptyset$.
\end{corollary}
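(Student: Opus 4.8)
The plan is to chain the two preceding theorems, so that the whole argument reduces to a short derivation together with one hypothesis check. By Theorem~\ref{the:GG'} I already have $I_{AMP}(G)=[I_{AMP}(G')]_\epsilon^\emptyset$, so it suffices to establish $[I_{AMP}(G')]_\epsilon^\emptyset=[I_{LWF}(G')]_\epsilon^\emptyset$. For this I would invoke Theorem~\ref{the:G'G'} to get the stronger equality $I_{AMP}(G')=I_{LWF}(G')$ and then apply the marginalization operator $[\cdot]_\epsilon^\emptyset$ to both sides.

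The only nontrivial point is verifying the premise of Theorem~\ref{the:G'G'}, namely that the set $D(Z)$ of nodes determined by $Z$ agrees under the AMP and the LWF interpretations of $G'$. I would argue that this holds because the deterministic relationships in $G'$ are functional facts inherited from the construction rather than from the separation semantics. Concretely, by the discussion preceding Theorem~\ref{the:GG'}, each $A\in V$ is a function of $pa_{G'}(A)$ and each error node $\epsilon^A$ is a function of $pa_{G'}(A)\setminus\epsilon^A\cup A$; these dependencies come from the underlying system of linear equations and do not depend on whether one reads $G'$ by the AMP or the LWF criterion. Hence $D(Z)$ is the same under both interpretations for every $Z\subseteq V\cup\epsilon$, and the premise of Theorem~\ref{the:G'G'} is met.

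With the premise in hand the conclusion is immediate: $I_{AMP}(G)=[I_{AMP}(G')]_\epsilon^\emptyset=[I_{LWF}(G')]_\epsilon^\emptyset$, where the first equality is Theorem~\ref{the:GG'} and the second follows from applying $[\cdot]_\epsilon^\emptyset$ to $I_{AMP}(G')=I_{LWF}(G')$. I expect no real obstacle; the one thing to keep clean is the observation that $[\cdot]_\epsilon^\emptyset$ preserves equality of independence models, which is immediate because, by definition, membership of $X\ci_{G'}Y|Z$ in $[I(G')]_\epsilon^\emptyset$ is phrased purely in terms of membership of $X\ci_{G'}Y|Z$ in $I(G')$ together with the constraint $X,Y,Z\subseteq V$.
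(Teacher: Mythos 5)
Your proposal is correct and follows essentially the same route as the paper, whose entire proof is to chain Theorem~\ref{the:GG'} with Theorem~\ref{the:G'G'} and marginalize out $\epsilon$. Your explicit check that $D(Z)$ agrees under both interpretations (because the deterministic relationships come from the functional structure of the construction, not from the separation criterion) is a point the paper leaves implicit, and it is argued correctly.
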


Now, let $G''$ denote the DAG obtained from $G'$ by replacing every edge $\epsilon^A - \epsilon^B$ in $G'$ with $\epsilon^A \ra S_{\epsilon^A\epsilon^B} \la \epsilon^B$. Figure \ref{fig:example} shows an example. The nodes $S_{\epsilon^A\epsilon^B}$ are called selection nodes. Let $S$ denote all the selection nodes in $G''$. The following theorem relates the semantics of $G'$ and $G''$.

\begin{theorem}\label{the:G'G''}
Assume that $G'$ and $G''$ have the same deterministic relationships. Then, $I_{LWF}(G')=[I_{LWF}(G'')]_\emptyset^S$.
\end{theorem}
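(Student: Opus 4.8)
The plan is to reduce the claimed identity to a statement about open routes and then exhibit a route-level correspondence between $G'$ and $G''$. By the definition of $[\cdot]_\emptyset^S$, the independence $X \ci_{G'} Y | Z$ lies in $[I_{LWF}(G'')]_\emptyset^S$ exactly when $X \ci_{G''} Y | Z \cup S$ lies in $I_{LWF}(G'')$, with $X, Y, Z \subseteq V \cup \epsilon$; note that this node-set restriction coincides with the one built into $I_{LWF}(G')$, so the two index families agree. Since LWF separation is the absence of an open route, it therefore suffices to show that for all $\alpha, \beta \in V \cup \epsilon$ and all $Z \subseteq (V \cup \epsilon) \setminus \alpha \setminus \beta$, there is a $Z$-open route between $\alpha$ and $\beta$ in $G'$ iff there is a $(Z \cup S)$-open route between them in $G''$.

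For the correspondence I would use two mutually inverse transformations, mirroring those in the proof of Theorem \ref{the:GG'}. Going from $G'$ to $G''$, replace every undirected edge $\epsilon^A - \epsilon^B$ traversed by the route with $\epsilon^A \ra S_{\epsilon^A\epsilon^B} \la \epsilon^B$; going from $G''$ to $G'$, contract every such $\epsilon^A \ra S_{\epsilon^A\epsilon^B} \la \epsilon^B$ back to $\epsilon^A - \epsilon^B$. The contraction is well defined because each selection node is a sink in $G''$ (its only incident edges are the two incoming arrows), so any route through it traverses it as a collider, and since $\alpha, \beta \in V \cup \epsilon$ no route has a selection node as an endpoint. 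Both maps fix the endpoints and send routes to routes.

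The core is a section-by-section comparison of the openness conditions, driven by three structural facts. First, every error node is a source in both graphs (its only edges point to its $V$-node and, in $G''$, to selection nodes), so in $G'$ every section meeting an error node is a non-collider section while in $G''$ every error node is a non-collider single-node section. Second, every selection node is a sink in $G''$, hence a collider single-node section. Third, the edges incident to the nodes of $V$ are identical in $G'$ and $G''$, so each $V$-node (always a single-node section in both graphs) has the same collider status. Hence the collider sections of the route in $G'$ are exactly the collider $V$-node sections, which match the corresponding $V$-node collider sections in $G''$, while the only extra collider sections in $G''$ are the selection nodes, and these are automatically satisfied since $S \subseteq Z \cup S \subseteq D(Z \cup S)$. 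The non-collider sections match too: a maximal error chain $\epsilon^{V_1} - \cdots - \epsilon^{V_k}$ of $G'$, which is necessarily non-collider, expands in $G''$ to alternating non-collider error-node sections $\epsilon^{V_i}$ and collider selection-node sections, and the requirement that no $\epsilon^{V_i}$ lie in the conditioning set is imposed identically on both sides; non-collider $V$-node sections transfer by the common incident-edge structure. Invoking the hypothesis that $G'$ and $G''$ have the same deterministic relationships, a node of $V \cup \epsilon$ is in $D_{G'}(Z)$ iff it is in $D_{G''}(Z \cup S)$, so each section's membership condition reads off identically in both graphs.

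The step I expect to be the main obstacle is precisely this determinism bookkeeping. One must be sure that enlarging the conditioning set from $Z$ to $Z \cup S$ in $G''$ determines no node of $V \cup \epsilon$ beyond those already determined by $Z$ in $G'$; otherwise a non-collider $V$- or error-node section open in $G'$ could be blocked in $G''$, or conversely. This is exactly what the hypothesis secures: the selection nodes are childless in $G''$ and carry no functional dependence toward $V \cup \epsilon$, so conditioning on them contributes nothing to $D(\cdot)$ restricted to $V \cup \epsilon$. A lesser subtlety is the treatment of the sections at the endpoints $\alpha, \beta$ and of degenerate routes that revisit a node; these are non-collider sections by definition and are handled by the same determinism equivalence, using only that $\alpha, \beta \notin Z$.
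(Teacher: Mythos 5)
Your overall strategy coincides with the paper's: reduce the identity to a route-level correspondence, transform routes from $G'$ to $G''$ by replacing each traversed edge $\epsilon^A - \epsilon^B$ with $\epsilon^A \ra S_{\epsilon^A\epsilon^B} \la \epsilon^B$ and contract back in the other direction, observe that the only collider sections of $G'$ are single nodes of $V$ (error nodes have no incoming arrows), that the incident edges of $V$-nodes agree in both graphs, and that the new collider sections in $G''$ are precisely the selection nodes, which are automatically activated because $S$ is in the conditioning set. The paper organizes the determinism differently --- it first proves the correspondence assuming no deterministic nodes and then lifts to the general case in one step using the hypothesis that $D(\cdot)$ agrees in $G'$ and $G''$ --- whereas you fold the $D(\cdot)$ bookkeeping directly into the section analysis; that difference is cosmetic, and your explicit remark that conditioning on the childless selection nodes contributes nothing to determination of $V \cup \epsilon$ is, if anything, more careful than the paper's treatment of the same point.

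There is, however, one concrete hole in your $G'' \to G'$ direction. LWF separation is defined via routes, which may repeat nodes, so a $(Z \cup S)$-open route in $G''$ can contain the subroute $\epsilon^A \ra S_{\epsilon^A\epsilon^B} \la \epsilon^A$, entering and leaving the selection node through the \emph{same} error node. Your contraction rule only covers $\epsilon^A \ra S_{\epsilon^A\epsilon^B} \la \epsilon^B$ and, applied to the degenerate subroute, would produce a self-loop $\epsilon^A - \epsilon^A$ that does not exist in $G'$; your well-definedness argument (selection nodes are sinks, so they are traversed as colliders) does not rule this configuration out, and your later claim that such degenerate pieces ``are non-collider sections by definition'' is wrong --- $S_{\epsilon^A\epsilon^B}$ is a collider section there. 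The fix is exactly what the paper does: before contracting, replace every subroute $\epsilon^A \ra S_{\epsilon^A\epsilon^B} \la \epsilon^A$ by the single node $\epsilon^A$, which preserves openness since $\epsilon^A$ then lies in a non-collider section and $\epsilon^A \notin D(Z)$ was already required. With that amendment your argument goes through.
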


\begin{proof}
Assume for a moment that $G'$ has no deterministic node. Then, $G''$ has no deterministic node either. We show below that every $Z$-open route between $\alpha$ and $\beta$ in $G'$ can be transformed into a $(Z \cup S)$-open route between $\alpha$ and $\beta$ in $G''$ and vice versa, with $\alpha, \beta \in V \cup \epsilon$. This implies that $I_{LWF}(G')=[I_{LWF}(G'')]_\emptyset^S$. We denote this independence model by $I_{NDN}(G')$.

First, let $\rho'$ denote a $Z$-open route between $\alpha$ and $\beta$ in $G'$. Then, we can easily transform $\rho'$ into a $(Z \cup S)$-open route $\rho''$ between $\alpha$ and $\beta$ in $G''$: Simply, replace every edge $\epsilon^A - \epsilon^B$ in $\rho'$ with $\epsilon^A \ra S_{\epsilon^A\epsilon^B} \la \epsilon^B$. To see that $\rho''$ is actually $(Z \cup S)$-open, note that every collider section in $\rho'$ is due to a subroute of the form $A \ra B \la C$ with $A, B \in V$ and $C \in V \cup \epsilon$. Then, any node that is in a collider (respectively non-collider) section of $\rho'$ is also in a collider (respectively non-collider) section of $\rho''$.

Second, let $\rho''$ denote a $(Z \cup S)$-open route between $\alpha$ and $\beta$ in $G''$. Then, we can easily transform $\rho''$ into a $Z$-open route $\rho'$ between $\alpha$ and $\beta$ in $G'$: First, replace every subroute $\epsilon^A \ra S_{\epsilon^A\epsilon^B} \la \epsilon^A$ of $\rho''$ with $\epsilon^A$ and, then, replace every subroute $\epsilon^A \ra S_{\epsilon^A\epsilon^B} \la \epsilon^B$ of $\rho''$ with $\epsilon^A - \epsilon^B$. To see that $\rho'$ is actually $Z$-open, note that every undirected edge in $\rho'$ is between two noise nodes and recall that no noise node has incoming directed edges in $G'$. Then, again every collider section in $\rho'$ is due to a subroute of the form $A \ra B \la C$ with $A, B \in V$ and $C \in V \cup \epsilon$. Then, again any node that is in a collider (respectively non-collider) section of $\rho'$ is also in a collider (respectively non-collider) section of $\rho''$.

Now, forget the momentary assumption made above that $G'$ has no deterministic node. Recall that we assumed that $D(Z)$ is the same no matter whether we are considering $G'$ or $G''$ for all $Z \subseteq V \cup \epsilon$. Recall also that, from the point of view of the separations in a LWF CG, that a node is determined by the conditioning set has the same effect as if the node were in the conditioning set. Then, $X \ci_{G''} Y | Z$ is in $[I_{LWF}(G'')]_\emptyset^S$ iff $X \ci_{G'} Y | D(Z)$ is in $I_{NDN}(G')$ iff $X \ci_{G'} Y | Z$ is in $I_{LWF}(G')$. Then, $I_{LWF}(G')=[I_{LWF}(G'')]_\emptyset^S$.
\end{proof}

The second major result of this paper is the following corollary, which shows that every AMP CG is Markov equivalent to some DAG under marginalization and conditioning. The corollary follows from Corollary \ref{cor:GG'}, Theorem \ref{the:G'G''} and the fact that $G''$ is a DAG and, thus, $I_{AMP}(G'') = I_{LWF}(G'')$.

\begin{corollary}\label{cor:GG''}
$I_{AMP}(G)=[I_{LWF}(G'')]_\epsilon^S=[I_{AMP}(G'')]_\epsilon^S$.
\end{corollary}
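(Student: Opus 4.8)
The plan is to chain together the two first major results already established and then to transfer the final equality to the AMP interpretation using that $G''$ is a DAG. For the first equality, I would start from Corollary \ref{cor:GG'}, which gives $I_{AMP}(G)=[I_{LWF}(G')]_\epsilon^\emptyset$, and substitute into it the identity $I_{LWF}(G')=[I_{LWF}(G'')]_\emptyset^S$ supplied by Theorem \ref{the:G'G''}. This yields $I_{AMP}(G)=\bigl[[I_{LWF}(G'')]_\emptyset^S\bigr]_\epsilon^\emptyset$, so that the task reduces to showing that the nested operator on the right coincides with $[I_{LWF}(G'')]_\epsilon^S$. Before invoking Theorem \ref{the:G'G''} I would first check that its hypothesis — that $G'$ and $G''$ share the same deterministic relationships — holds by construction: $G''$ differs from $G'$ only by replacing each undirected edge $\epsilon^A-\epsilon^B$ with $\epsilon^A \ra S_{\epsilon^A\epsilon^B} \la \epsilon^B$, which introduces new collider nodes but does not change how any $Z \subseteq V \cup \epsilon$ determines the nodes of $V \cup \epsilon$, since $pa$ restricted to $V \cup \epsilon$ is unaffected.

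The heart of the argument is the bookkeeping identity $\bigl[[J]_\emptyset^S\bigr]_\epsilon^\emptyset=[J]_\epsilon^S$ for $J=I_{LWF}(G'')$, which I would verify directly from the definition of $[I(G)]_L^S$. Unfolding the outer operator, $X \ci Y | Z$ lies in $\bigl[[J]_\emptyset^S\bigr]_\epsilon^\emptyset$ iff $X \ci Y | Z$ lies in $[J]_\emptyset^S$ and $X,Y,Z \subseteq V$; unfolding the inner operator, this holds iff $X \ci Y | Z \cup S$ lies in $J$ together with the domain restrictions $X,Y,Z \subseteq V \cup \epsilon$ and $X,Y,Z \subseteq V$, the first being subsumed by the second. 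Comparing with the direct definition of $[J]_\epsilon^S$ — namely $X \ci Y | Z \cup S \in J$ with $X,Y,Z \subseteq V$ — shows the two independence models coincide. Combined with the substitution above, this gives the first equality $I_{AMP}(G)=[I_{LWF}(G'')]_\epsilon^S$.

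For the second equality I would simply observe that $G''$ is a DAG, so by the remark in the Preliminaries $I_{AMP}(G'')=I_{LWF}(G'')$. Since $[\,\cdot\,]_\epsilon^S$ is a function of the underlying independence model alone, applying it to both sides gives $[I_{AMP}(G'')]_\epsilon^S=[I_{LWF}(G'')]_\epsilon^S$, which closes the chain.

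I expect the only genuine subtlety to be the operator-composition identity of the second paragraph; everything else is a direct appeal to previously established results. That composition is routine once the node-set domains are tracked carefully, but it is precisely where an off-by-one in the marginalization and conditioning sets — for instance forgetting that $S$ must be removed from the domain of $[J]_\epsilon^S$, or mis-ordering the two operations — would break the argument, so it is the step I would write out in full.
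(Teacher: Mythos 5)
Your proposal is correct and follows essentially the same route as the paper, which derives the corollary by chaining Corollary \ref{cor:GG'}, Theorem \ref{the:G'G''}, and the fact that $G''$ is a DAG so that $I_{AMP}(G'')=I_{LWF}(G'')$. The operator-composition identity $\bigl[[J]_\emptyset^S\bigr]_\epsilon^\emptyset=[J]_\epsilon^S$ that you verify explicitly is left implicit in the paper, so your write-up simply fills in that bookkeeping.
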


\section{EAMP CGs Are Closed under Marginalization}\label{sec:closed}

In this section, we show that EAMP CGs are closed under marginalization, meaning that for any EAMP CG $G'$ and $L \subseteq V$ there is an EAMP CG $[G']_L$ st $[I_{AMP}(G')]_{L \cup \epsilon}=[I_{AMP}([G']_L)]_\epsilon$. We actually show how to transform $G'$ into $[G']_L$.

To gain some intuition into the problem and our solution to it, assume that $L$ contains a single node $B$. Then, marginalizing out $B$ from the system of linear equations associated with $G$ implies the following: For every $C$ st $B \in pa_{G}(C)$, modify the equation $C = \beta_C \: pa_{G}(C) + \epsilon^C$ by replacing $B$ with the right-hand side of its corresponding equation, i.e. $\beta_B \: pa_{G}(B) + \epsilon^B$ and, then, remove the equation $B = \beta_B \: pa_{G}(B) + \epsilon^B$ from the system. In graphical terms, this corresponds to $C$ inheriting the parents of $B$ in $G'$ and, then, removing $B$ from $G'$. The following pseudocode formalizes this idea for any $L \subseteq V$.

\begin{table}[H]
\centering
\scalebox{1.0}{
\begin{tabular}{ll}
1 & Let $[G']_L=G'$\\
2 & Repeat until all the nodes in $L$ have been considered\\
3 & \hspace{0.3cm} Let $B$ denote any node in $L$ that has not been considered before\\
4 & \hspace{0.3cm} For each pair of edges $A \ra B$ and $B \ra C$ in $[G']_L$ with $A, C \in V \cup \epsilon$\\
5 & \hspace{0.8cm} Add the edge $A \ra C$ to $[G']_L$\\
6 & \hspace{0.3cm} Remove $B$ and all the edges it participates in from $[G']_L$\\
\end{tabular}}
\end{table}

Note that the result of the pseudocode above is the same no matter the ordering in which the nodes in $L$ are selected in line 3. Note also that we have not yet given a formal definition of EAMP CGs. We define them recursively as all the graphs resulting from applying the pseudocode in the previous section to an AMP CG, plus all the graphs resulting from applying the pseudocode in this section to an EAMP CG. It is easy to see that every EAMP CG is an AMP CG over $W \cup \epsilon$ with $W \subseteq V$ and, thus, its semantics are defined. Theorem \ref{the:GG'} together with the following theorem confirm that these semantics are as desired.

\begin{theorem}\label{the:closed}
$[I_{AMP}(G')]_{L \cup \epsilon}=[I_{AMP}([G']_L)]_\epsilon$.
\end{theorem}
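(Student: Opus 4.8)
The plan is to reduce the statement to the marginalization of a single node and then iterate. For independence models the marginalization operator composes, $[[\mathcal I]_{S_1}]_{S_2}=[\mathcal I]_{S_1\cup S_2}$, and the graph transformation composes as well, $[[G']_{\{B\}}]_{L\setminus B}=[G']_L$ (this is the order-independence noted right after the pseudocode). Hence, writing $L=\{B\}\cup L'$ with $L'=L\setminus\{B\}$ and assuming the single-node identity
\[
[I_{AMP}(G')]_{\{B\}\cup\epsilon}=[I_{AMP}([G']_{\{B\}})]_\epsilon \qquad (\star)
\]
for every EAMP CG, we obtain $[I_{AMP}(G')]_{L\cup\epsilon}=[[I_{AMP}(G')]_{\{B\}\cup\epsilon}]_{L'}=[[I_{AMP}([G']_{\{B\}})]_\epsilon]_{L'}=[I_{AMP}([G']_{\{B\}})]_{L'\cup\epsilon}$, and since $[G']_{\{B\}}$ is again an EAMP CG, induction on $|L|$ applied to $[G']_{\{B\}}$ and $L'$ (together with $[[G']_{\{B\}}]_{L'}=[G']_L$) finishes the proof. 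So the whole content is in $(\star)$.

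To prove $(\star)$ I would argue, exactly as in the proof of Theorem~\ref{the:GG'}, that for $\alpha,\beta\in V\setminus\{B\}$ and $Z\subseteq V\setminus\{B\}\setminus\alpha\setminus\beta$ there is a $Z$-open path between $\alpha$ and $\beta$ in $G'$ iff there is one in $[G']_{\{B\}}$. A key simplification is that, because $\epsilon$ is marginalized out in $(\star)$, the set $Z$ contains no error node, which keeps the determinism under control. First I would record a determinism lemma: for $Z\subseteq V\setminus\{B\}$ a $V$-node lies in $D(Z)$ iff it belongs to $Z$, each $\epsilon^{A}$ lies in $D(Z)$ in either graph iff $\{A\}\cup pa_G(A)\subseteq Z$ (both sides being false when $B\in pa_G(A)$, so removing $B$ does no harm), and every strictly descending route to $D(Z)$ survives the marginalization, routes through $B$ being merely contracted.

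Since $B\in V$, all edges incident to $B$ in $G'$ are directed, so on a path $B$ occurs in exactly one of the local forms $A\ra B\ra C$, $A\la B\la C$, $A\la B\ra C$ or $A\ra B\la C$. For the two chains I would replace the subpath by the contracted edge $A\ra C$ present in $[G']_{\{B\}}$ by construction; as $B$ is non-triplex, removing it cannot hurt openness. For the fork $A\la B\ra C$, $B$ is non-triplex and $B\notin Z$, hence $B\notin D(Z)$, and I would replace it by $A\la\epsilon^{B}\ra C$, using that the edges $\epsilon^{B}\ra A$ and $\epsilon^{B}\ra C$ are created when $B$ is marginalized and that $\epsilon^{B}\notin D(Z)$ because $B\notin D(Z)$. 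The delicate case is the collider $A\ra B\la C$: here openness forces $B\in san_{G'}(D(Z))$ (it cannot be in $D(Z)$ since $B\notin Z$); taking a witnessing strictly descending route $B\ra D_1\ra\cdots\ra D_k$ with $D_k\in D(Z)$, the marginalization creates $A\ra D_1$ and $C\ra D_1$, so I would replace the collider by $A\ra D_1\la C$ and invoke the determinism lemma to get $D_1\in D(Z)\cup san_{[G']_{\{B\}}}(D(Z))$, keeping the new collider open. The converse transformation expands each created edge $A\ra C$ not already in $G'$ into $A\ra B\ra C$ and re-expands the $\epsilon^{B}\ra\cdot$ edges, the inserted copies of $B$ being non-triplex and not in $D(Z)$.

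The main obstacle will be exactly this collider bookkeeping together with the path-versus-route issue it creates. Pushing a marginalized collider down to its child $D_1$ (forward direction), and re-inserting $B$ to expand several created edges at once (converse direction), can make a node repeat, so the object produced is a priori only an open \emph{route} rather than an open \emph{path}. I would handle this in the spirit of Theorem~\ref{the:GG'}: choose the witnessing descending route and the expansions so that repetitions can be excised, and argue that after excision the shortened route is still $Z$-open and therefore contains the required $Z$-open path. Verifying that such excision never alters the triplex versus non-triplex status on which openness depends, and that the $san$ memberships transfer correctly through the contraction of descending routes, is where the real work lies; everything else is the routine case analysis mirrored from Theorem~\ref{the:GG'}.
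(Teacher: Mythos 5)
Your overall skeleton matches the paper's proof: reduction to a single marginalized node $B$ followed by induction, a determinism lemma saying that for $Z \subseteq V \setminus B$ the set $D(Z)$ is effectively the same in $G'$ and $[G']_{\{B\}}$, and the local replacements for chains ($A \rightarrow B \rightarrow C \mapsto A \rightarrow C$) and forks ($A \leftarrow B \rightarrow C \mapsto A \leftarrow \epsilon^B \rightarrow C$, with $\epsilon^B \notin D(Z)$ because $B \notin Z$) are exactly the paper's. The gap is in the two items you yourself defer as ``the real work'': the collider case $A \rightarrow B \leftarrow C$ and the excision of repeated nodes. The paper never does that work, because it makes both problems vanish with one move that is absent from your proposal: before constructing anything, it switches to the route-based characterization of AMP separation (Andersson et al., Remark 3.1), under which a \emph{route} is $Z$-open iff every triplex node is in $D(Z)$ --- no $san$ clause at all --- and no non-triplex node is in $D(Z)$. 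With that criterion, (i) the collider case cannot occur: $B$ as a triplex node would have to lie in $D(Z)$, which is impossible since $B \notin Z$ and $\epsilon^B \notin Z$, so $B$ only ever appears as $A \rightarrow B \rightarrow C$, $A \leftarrow B \leftarrow C$ or $A \leftarrow B \rightarrow C$; and (ii) since routes may repeat nodes, the substitutions need not yield paths and no excision is ever required.

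Why your deferred step is a genuine gap rather than a routine check: your constructions produce walks satisfying the \emph{path}-based openness conditions, and no cited result covers that hybrid object --- Remark 3.1 relates open paths under the path criterion to open routes under the route criterion, not to routes judged by the path criterion. Moreover, naive excision can flip the status of the junction node: cutting a route between two visits of a node $D$ that is entered as $\cdots \rightarrow D \rightarrow \cdots$ and later traversed as $\cdots \leftarrow D \leftarrow \cdots$ leaves $\cdots \rightarrow D \leftarrow \cdots$, turning two non-triplex occurrences (required to be \emph{outside} $D(Z)$) into a triplex one (required to be \emph{inside} $D(Z) \cup san(D(Z))$). Such cases can sometimes be rescued (the excised detour may itself witness $D \in san(D(Z))$), but doing so in general amounts to reproving the path/route equivalence of Remark 3.1, on top of the $san$-transfer bookkeeping your collider surgery ($A \rightarrow B \leftarrow C \mapsto A \rightarrow D_1 \leftarrow C$ along a witnessing descending route) already needs. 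To close the gap, either carry out that equivalence-type argument in full, or --- far simpler, and what the paper does --- adopt the route-based criterion from the start and redo your case analysis there, where the collider case is vacuous and repetitions are harmless.
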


\begin{proof}
We find it easier to prove the theorem by defining separation in AMP CGs in terms of routes rather than paths. A node $B$ in a route $\rho$ in an AMP CG $G$ is called a triplex node in $\rho$ if $A \ra B \la C$, $A \ra B - C$, or $A - B \la C$ is a subroute of $\rho$ (note that maybe $A=C$ in the first case). A node $B$ in $\rho$ is called a non-triplex node in $\rho$ if $A \la B \ra C$, $A \la B \la C$, $A \la B - C$, $A \ra B \ra C$, $A - B \ra C$, or $A - B - C$ is a subroute of $\rho$ (note that maybe $A=C$ in the first and last cases). Note that $B$ may be both a triplex and a non-triplex node in $\rho$. Moreover, $\rho$ is said to be $Z$-open with $Z \subseteq V$ when 

\begin{itemize}
\item every triplex node in $\rho$ is in $D(Z)$, and

\item no non-triplex node in $\rho$ is in $D(Z)$.
\end{itemize}

When there is no $Z$-open route in $G$ between a node in $X$ and a node in $Y$, we say that $X$ is separated from $Y$ given $Z$ in $G$ and denote it as $X \ci_G Y | Z$. This and the standard definition of separation in AMP CGs introduced in Section \ref{sec:preliminaries} are equivalent, in the sense that they identify the same separations in $G$ \citep[Remark 3.1]{Anderssonetal.2001}.

We prove the theorem for the case where $L$ contains a single node $B$. The general case follows by induction. Specifically, given $\alpha, \beta \in V \setminus L$ and $Z \subseteq V \setminus L \setminus \alpha \setminus \beta$, we show below that every $Z$-open route between $\alpha$ and $\beta$ in $[G']_L$ can be transformed into a $Z$-open route between $\alpha$ and $\beta$ in $G'$ and vice versa.

First, let $\rho$ denote a $Z$-open route between $\alpha$ and $\beta$ in $[G']_L$. We can easily transform $\rho$ into a $Z$-open route between $\alpha$ and $\beta$ in $G'$: For each edge $A \ra C$ or $A \la C$ with $A, C \in V \cup \epsilon$ that is in $[G']_L$ but not in $G'$, replace each of its occurrence in $\rho$ with $A \ra B \ra C$ or $A \la B \la C$, respectively. Note that $B \notin D(Z)$ because $\epsilon^B \notin Z$.

Second, let $\rho$ denote a $Z$-open route between $\alpha$ and $\beta$ in $G'$. Note that $B$ cannot participate in any undirected edge in $G'$, because $B \in V$. Note also that $B$ cannot be a triplex node in $\rho$, because $B \notin D(Z)$. Note also that $B \neq \alpha, \beta$. Then, $B$ can only appear in $\rho$ in the following configurations: $A \ra B \ra C$, $A \la B \la C$, or $A \la B \ra C$ with $A, C \in V \cup \epsilon$. Then, we can easily transform $\rho$ into a $Z$-open route between $\alpha$ and $\beta$ in $[G']_L$: Replace each occurrence of $A \ra B \ra C$ in $\rho$ with $A \ra C$, each occurrence of $A \la B \la C$ in $\rho$ with $A \la C$, and each occurrence of $A \la B \ra C$ in $\rho$ with $A \la \epsilon^B \ra C$. In the last case, note that $\epsilon^B \notin D(Z)$ because $B \notin Z$.
\end{proof}

\section{Conclusions}\label{sec:conclusions}

In this paper, we have introduced EAMP CGs to model explicitly the errors in the system of linear equations associated to an AMP CG. We have shown that, as desired, every AMP CG is Markov equivalent to its corresponding EAMP CG under marginalization. We have used this result to show that every AMP CG is Markov equivalent to some LWF CG under marginalization. This result links the two most popular interpretations of CGs. We have used the previous result to show that every AMP CG is also Markov equivalent to some DAG under marginalization and conditioning. This result implies that the independence model represented by an AMP CG can be accounted for by some data generating process that is partially observed and has selection bias. Finally, we have shown that EAMP CGs are closed under marginalization, which guarantees parsimonious models under marginalization.

We are currently studying the following two questions. Can we modify EAMP CGs so that they become closed under conditioning too ? Can we repeat the work done here for LWF CGs ? That is, can we add deterministic nodes to LWF CGs so that they have DAGs as departure point and they become closed under marginalization and conditioning ? 

\section*{Acknowledgments}

This work is funded by the Center for Industrial Information Technology (CENIIT) and a so-called career contract at Link\"oping University, by the Swedish Research Council (ref. 2010-4808), and by FEDER funds and the Spanish Government (MICINN) through the project TIN2010-20900-C04-03.

\end{document}